\newcommand{\R}{{\mathbb{R}}}
\newcommand{\B}{{\mathbb B}}
\newcommand{\N}{{\mathbb{N}}}
\newcommand{\X}{{\mathbf{X}}}
\newcommand{\T}{{\mathbf{T}}}
\newcommand{\So}{{\mathbf{S}}}
\newcommand{\U}{{\mathbf{U}}}
\newcommand{\cen}{\mathsf c}
\newcommand{\rad}{\mathsf r}
\newcommand{\A}{{\mathbf{A}}}
\newtheorem{theorem}{Theorem}[section]
\newtheorem{assumption}{Assumption}
\newtheorem{definition}[theorem]{Definition}
\newtheorem{lemma}[theorem]{Lemma}
\newtheorem{remark}[theorem]{Remark}
\newtheorem{problem}[theorem]{Problem}
\newenvironment{proof}{\paragraph{Proof:}}{\hfill$\square$}
\title{Temporal Reach-Avoid-Stay Control for Differential Drive Systems via Spatiotemporal Tubes
\thanks{ This work was supported in part by the SERB Start-Up Research Grant; in part by the ARTPARK. The work of Ratnangshu Das was supported by the Prime Minister’s Research Fellowship from the Ministry of Education, Government of India.}
}
\author{
 Ratnangshu Das \\
  Robert Bosch Centre for Cyber-Physical Systems\\
  IISc, Bengaluru, India\\
  \texttt{ratnangshud@iisc.ac.in} \\
   \And
 Ahan Basu \\
  Robert Bosch Centre for Cyber-Physical Systems\\
  IISc, Bengaluru, India\\
  \texttt{ahanbasu@iisc.ac.in} \\
  \And
Christos Verginis \\
  Department of Electrical Engineering\\
  Uppsala University, Uppsala, Sweden\\
  \texttt{christos.verginis@angstrom.uu.se} \\
  \And
 Pushpak Jagtap \\
  Robert Bosch Centre for Cyber-Physical Systems\\
  IISc, Bengaluru, India\\
  \texttt{pushpak@iisc.ac.in} \\
}
\begin{document}
\maketitle

\begin{abstract}
This paper presents a computationally lightweight and robust control framework for differential-drive mobile robots with dynamic uncertainties and external disturbances, guaranteeing the satisfaction of Temporal Reach-Avoid-Stay (T-RAS) specifications. The approach employs circular spatiotemporal tubes (STTs), characterized by smoothly time-varying center and radius, to define dynamic safe corridors that guide the robot from the start region to the goal while avoiding obstacles. In particular, we first develop a sampling-based synthesis algorithm to construct a feasible STT that satisfies the prescribed timing and safety constraints with formal guarantees. To ensure that the robot remains confined within this tube, we then analytically design a closed-form control that is computationally efficient and robust to disturbances. The proposed framework is validated through simulation studies on a differential-drive robot and benchmarked against state-of-the-art methods, demonstrating superior robustness, accuracy, and computational efficiency.
\end{abstract}

\section{Introduction}

Differential-drive mobile robots are among the most widely used robotic platforms due to their mechanical simplicity, maneuverability, and effectiveness in real-world navigation and manipulation tasks. They have been extensively adopted across diverse domains, including industrial applications \cite{fragapane2022increasing}, the healthcare sector \cite{zhou2018fuzzy}, autonomous exploration \cite{ferreira2024autonomous}, 
and many other fields. A fundamental challenge in deploying mobile robots is ensuring that they can reach designated targets in a fixed time while avoiding time-varying obstacles and adhering to state constraints. Temporal-reach-avoid-stay (T-RAS) specifications are crucial for addressing such challenges \cite{Meng1}. T-RAS formulations also serve as building blocks for more complex temporal-logic tasks \cite{Kloetzer, das2025spatiotemporal}. Therefore, developing and implementing safe and reliable control strategies is essential to perform these tasks effectively.

A variety of control approaches have been developed to enforce reach-avoid-type specifications, including symbolic control, Control Barrier Functions (CBFs), and Model Predictive Control (MPC). Symbolic control techniques \cite{tabuada2009verification} rely on abstractions of the state and input spaces and have shown effectiveness in solving complex temporal-logic tasks. However, these methods often suffer from severe computational overhead as the dimensionality or nonlinearity of the system increases, making them difficult to scale. 

To overcome discretization-related limitations, Control Barrier Functions (CBFs) \cite{CBF} provide a continuous-state, optimization-based framework for synthesizing safety controllers. They have been successfully employed for dynamic obstacle avoidance \cite{C3BF,tayal2024collision} and for enforcing temporal-logic constraints \cite{CBF_STL, Meng3}. 
MPC \cite{sun2017disturbance, jian2022dynamic} has also been widely employed to enforce reach-avoid and safety-critical specifications by optimizing future trajectories subject to dynamic and constraint models \cite{MPC}.
Despite solving the scalability issue of symbolic methods, CBF- and MPC-based control still relies on an optimization step, which can be computationally demanding for high-dimensional or fast-evolving systems. Moreover, they struggle to guarantee satisfaction of time-critical objectives such as Temporal reachability, and their performance degrades under dynamic uncertainties and external disturbances, {which affect substantially real robots}.

{In contrast, funnel-based control \cite{PPC1} provides a natural framework capable of handling both external disturbances and time-critical objectives. It has been successfully applied to tracking control of unknown nonlinear systems \cite{mishra2023approximation} and multi-agent coordination tasks \cite{Funnel_STL_MAS}. However, the traditional funnel technique is not inherently equipped to address constraints such as obstacle avoidance.  
Some recent studies \cite{RB,vrohidis2018prescribed} integrate articifial potential-field with funnel-invariance methods to establish collision-free navigation, without, however, accounting for dynamic uncertainties. The recently introduced KDF method \cite{verginis2022kdf} combines sampling-based planning with funnel control to guarantee collision-free navigation for systems with uncertain dynamics, limited, however to fully actuated systems. To tackle that, KDF was recently extended in \cite{lapandic2024kinodynamic} to account for underactuated surface vessels with uncertain dynamics, still limited to static environments. Additionally, \cite{lapandic2024kinodynamic} restricts the control design to output only positive velocities, which, although suitable for surface vessels, can be conservative for general differential-drive mobile robots. }

To overcome the static-obstacle assumption, the spatiotemporal tube (STT) framework \cite{STT} was introduced. STT define smooth, time-varying regions in space that evolve dynamically to form safe corridors around obstacles, allowing system trajectories to satisfy T-RAS objectives \cite{das2025spatiotemporal}.  
In this work, we extend the Spatiotemporal Tube (STT) framework to a broader class of systems, specifically the underactuated systems, with the main focus lying on differential-drive mobile robots, to ensure satisfaction of T-RAS specification. To achieve this, we introduce circular STT, characterized by smooth, time-varying center and radius, which serve as safe, time-evolving corridors guiding the robot toward its goal while avoiding obstacles. The contributions of this paper are twofold. First, we present a sampling-based synthesis procedure to construct circular STT that originate within the start region, strictly avoid obstacles, and reach the target region within a prescribed time horizon. Second, we derive a closed-form control law that is inherently robust to external disturbances and guarantees the robot remains confined within the designed STT, thereby ensuring satisfaction of the T-RAS task.
The effectiveness of the proposed framework is demonstrated through simulation studies on a differential-drive robot, and its performance is benchmarked against state-of-the-art approaches, highlighting its accuracy, robustness, and computational efficiency.

\section{Preliminaries and Problem Formulation}
\label{sec:prelim}
\subsection{Notation}
For $a,b\in\N$ with $a\leq b$, we denote the closed interval in $\N$ as $[a;b] := \{a, a+1, \ldots, b\}$. A ball centered at $\cen \in \mathbb{R}^n$ with radius $\rad \in \mathbb{R}^+$ is defined as $\mathcal{B}(\cen, \rad) := \{ x \in \mathbb{R}^n \mid \|x - \cen\| \leq \rad \}$. The distance of a point $x\in \R^n$ from a set $A \subseteq \R^n$ is defined as $\text{dist}(x, A) := \inf_{y \in A} \|x - y\|.$ All other notation in this paper follows standard mathematical conventions.

\subsection{System Definition}
We consider the differential-drive mobile robot whose motion is controlled by the linear velocity $v \in \mathbb{R}$ and the angular velocity $\omega \in \mathbb{R}$. The system dynamics $\mathcal{S}$ is given by
\begin{equation} \label{eqn:sysdyn_unicycle}
\mathcal{S}: \dot{\xi} = f(\xi, u) + d(\xi,t) \implies
\begin{bmatrix}
\dot{x}_1 \\ \dot{x}_2 \\ \dot{\theta}
\end{bmatrix}
=
\begin{bmatrix}
v \cos(\theta) \\ v \sin(\theta) \\ \omega
\end{bmatrix} + 
d(\xi,t),
\end{equation}
where $ u = [v, \omega]^\top \in \R^2$ is the control input and $d(\xi,t) = [d_1(\xi,t), d_2(\xi,t), d_\theta(\xi,t)]^\top \in \R^3$ is an unknown term representing model uncertainties external time-varying disturbances that is locally Lipschitz in $\xi$ and uniformly bounded in $t$. The system state is defined as $ \xi = [x_1, x_2, \theta]^\top$, where $x = [x_1, x_2]^\top \in \R^2 $ is the position in a 2D plane and $ \theta \in \R $ the heading angle with respect to a global reference frame.

\subsection{System Specification}
Let $\X \subseteq \R^2$ denote the state space and $\U: \R_0^+ \rightarrow \X$ be the time-varying unsafe set, whose maximum permissible velocity is given by $\mathcal{L}_U$. In this work, the time evolution of the unsafe set is considered to be known a priori, but that does not limit the scope of the work. Such an assumption is standard in<§ the literature of reach-avoid problems \cite{ahmadzadeh2009multi, chen2015safe}.
The initial and target sets are denoted by $\So \subset \X \setminus \U(0)$ and $\T \subset \X  \setminus \U(t_c)$, respectively, both assumed to be compact and connected. $t_c \in \R^+$ is the prescribed time of task completion. The control objective is formalized as a Temporal Reach-Avoid-Stay (T-RAS) task, defined below.

\begin{definition}[Temporal Reach-Avoid-Stay Task]\label{def:ptras}
    Given a system $\mathcal{S}$, a state space $\X$, a time-varying unsafe set $\U(t)$, an initial set $\So$, a target set $\T$, and the prescribed-time $t_c$, the system satisfies the T-RAS specification if, for a given initial condition $x(0) \in \So$, there exists $t \in [0,t_c]$, such that $x(t) \in \T$ and for all $s \in [0,t_c], x(s) \in \X \setminus \U(s)$.
\end{definition}

We now state the control problem addressed in this work.

\begin{problem}\label{prob1}
   Given the differential-drive system $\mathcal{S}$ in \eqref{eqn:sysdyn_unicycle} and a T-RAS task in Definition~\eqref{def:ptras}, the goal is to design a closed form control law $[v,\omega]^\top$, that is robust to external disturbances and ensures the system satisfies the specified T-RAS task.
\end{problem}

\section{Spatiotemporal Tube (STT)}\label{sec:stt}
To satisfy the given T-RAS specification, we leverage Spatiotemporal Tube (STT). STT act as time-varying structures in the state space that can effectively capture the dynamic constraints of a T-RAS task. 

\begin{definition}[STT for T-RAS Specification]\label{def:stt}
Consider a T-RAS task in Definition~\ref{def:ptras}. A time-varying region 
$\Gamma(t) = \B\big(\cen(t),\rad(t)\big)$
characterized by continuously differentiable centre $\cen:\R_0^+\rightarrow\R^2$ and radius $\rad:\R_0^+\rightarrow\R^+$, is a valid STT for the T-RAS specification, if the following hold:
\begin{subequations}
\begin{align}\label{eqn:stt_stl}
    &\rad(t) > 0, \forall t \in [0, t_c], \\ 
    &\Gamma(0) \subseteq \So, \ \Gamma(t_c) \subseteq \T,\
    \Gamma(t) \subseteq \X \setminus \U(t), \forall t \in [0, t_c]. 
\end{align}
\end{subequations}
Thus, the radius remains strictly positive and the tube starting from the initial set $\So$, reaches the target set $\T$ at time $t_c$, while avoiding the unsafe set $\U$ and staying within the state space $\X$.
\end{definition}

If the vehicle position is constrained to lie within the STT:
\begin{align} \label{eqn:stt_constrain}
    x(t) \in \Gamma (t), \forall t \in [0, t_c],
\end{align}
the T-RAS specification is guaranteed to be satisfied.

We now propose a sampling-based technique to construct the STT that connects $\So$ to $\T$ in time $t_c$, while avoiding $\U$. 

We first fix the structure of the center and radius curves as:
\begin{align}
    \cen_i(q_{\cen,i},t) &= \sum_{k=1}^{z_i} q_{\cen,i}^{(k)} b_{\cen,i}^{(k)} (t), i \in [1;2], \\
    \rad(q_\rad,t) &= \sum_{k=1}^{z_\rad} q_\rad^{(k)} b_\rad^{(k)}(t),
\end{align}
where $\cen(q_{\cen},t) = [\cen_1(q_{\cen,1},t), \ldots, \cen_n(q_{\cen,n},t)]^\top$ gives the tube's centre, and $\rad(q_\rad,t)$ defines the tube's radius. $b_{\cen,i}(t)=[b_{\cen,i}^{(1)}, ..., b_{\cen,i}^{(z_i)}]^\top$, $b_\rad(t)=[b_{\rad}^{(1)}, ..., b_{\rad}^{(z_\rad)}]^\top$ are user-defined continuously differentiable basis functions and $q_\cen = [q_{\cen,1}, q_{\cen,2}]^\top$ with $q_{\cen,i} = [q_{\cen,i}^{(1)}, ..., q_{\cen,i}^{(z_i)}]^\top \in \mathbb{R}^{z_i}, q_\rad = [q_\rad^{(1)}, ..., q_\rad^{(z_\rad)}] \in \mathbb{R}^{z_\rad}$ denote the unknown coefficients.

To satisfy the conditions in Definition~\ref{def:stt}, we formulate the following Robust Optimization Program (ROP):
\begin{subequations} \label{eq:ROP}
\begin{align}
& \min_{[q_{\cen,1}, q_{\cen,2}, q_\rad,\eta]} \quad \eta, \qquad \textrm{s.t.} \notag \\
& \quad \| \cen(q_{\cen},0) - \cen_{\So} \| = 0, \ \rad(c_\rad,0)  = \rad_{\So}, \\
& \quad \| \cen(q_{\cen},t_c) - \cen_{\T} \| = 0, \ \rad(c_\rad,t_c)  = \rad_{\T}, \\
& \forall t \in [0,t_c]: \notag \\
& \quad \| \cen(q_{\cen},t) - \cen_{\X} \| + \rad(c_\rad,t) - \rad_{\X} \leq \eta, \\
& \quad -\rad(q_\rad, t) + \rad_d \leq \eta, \\
& \quad -\text{dist}(\cen(q_{\cen},t),\U(t)) + \rad(q_\rad,t) \leq \eta,
\end{align}
\end{subequations}
where $\B(\cen_\So, \rad_\So) \subseteq \So$, $\B(\cen_{\T}, \rad_\T) \subseteq \T$, and $\B(\cen_{\X}, \rad_\X) \subseteq \X$, 
with $\cen_\So \in \So, \cen_\T \in \T, \cen_\X \in \X$ and $\rad_\So, \rad_\T, \rad_\X \in \R^+.$ 
$\rad_{d} \in \R^+$ is a user-defined lower bound on the tube radius. 

Note that $\eta$ is introduced as a slack variable, which has to be minimized in order to reduce the worst-case violations of the constraints across the dimensions. One can readily observe that a solution to the ROP with $\eta^* \leq 0$ ensures that all conditions in Definition~\ref{def:stt} are satisfied. The proposed solution is sound: whenever a feasible solution exists, the resulting tubes guarantee that the robot will reach the goal, avoiding the obstacles. The feasibility of the optimization problem \eqref{eq:ROP}, however, depends on the degree of the polynomial basis functions used to parameterize the STTs. If no feasible solution is obtained, then a higher degree polynomial may be required to represent the tubes.

The ROP in \eqref{eq:ROP} inherently involves an infinite set of constraints defined over continuous time, making direct computation infeasible. To overcome this, we design a sampling-based framework for constructing the tube.

We sample $N$ discrete points $t_s$, from the continuous time space $[0, t_c]$, where $s = [1;N]$. Consider a time-ball $T_s $ around each sample $t_s$ with radius $\epsilon$. The samples are chosen such that for any time $t \in [0, t_c]$, a sampled point $t_s$ is sufficiently close
\begin{align}\label{eq:ball}
    | t - t_s | \leq \epsilon , \forall t \in [0, t_c].
\end{align}  
This ensures that the union of all these time-balls covers the entire domain: $\bigcup_{s=1}^{N} T_s \supset [0,t_c]$.

We then construct the associated Scenario Optimization Program (SOP) by replacing the continuous-time constraints with constraints at the sampled time instances $t_s$:
\begin{subequations} \label{eq:SOP}
\begin{align}
& \min_{[q_{\cen,1}, q_{\cen,2}, q_\rad,\eta]} \quad \eta, \qquad \textrm{s.t.} \notag \\
& \quad \| \cen(q_{\cen},0) - \cen_{\So} \| = 0, \ \rad(q_\rad,0)  = \rad_{\So}, \\
& \quad \| \cen(q_{\cen},t_c) - \cen_{\T} \| = 0, \ \rad(q_\rad,t_c)  = \rad_{\T}, \\
& \forall t_s \in [0,t_c], s \in [1;N]: \notag \\
& \quad \| \cen(q_{\cen},t_s) - \cen_{\X} \| + \rad(q_\rad,t_s) - \rad_{\X} \leq \eta, \\
& \quad -\rad(q_\rad, t_s) + \rad_d \leq \eta, \\
& \quad -\text{dist}(\cen(q_{\cen},t_s),\U(t)) + \rad(q_\rad,t_s) \leq \eta.
\end{align}
\end{subequations}

One can observe that SOP in \eqref{eq:SOP} has a finite number of constraints of the same form as \eqref{eq:ROP}. To guarantee that the Tube formed by solving the SOP in \eqref{eq:SOP}, fulfill the ROP constraints in \eqref{eq:ROP}, we assume the following: 
\begin{assumption} \label{assum:funlip}
    The functions $\cen(q_{\cen},t)$ and $\rad(q_{\rad},t)$ are Lipschitz continuous in $t$, with constants $\mathcal{L}_{\cen}$ and $\mathcal{L}_{\rad}$, respectively.
\end{assumption}

\begin{lemma}\label{lem:point-set-dist}
If the point-to-set distances of two points $y_1$ and $y_2$ from a set $\A$ is defined as $\text{dist}(y_1,\A)$ and $\text{dist}(y_2,\A)$, then $\text{dist}(y_1,\A) - \text{dist}(y_2,\A) \leq \| y_1 - y_2 \|$.
\end{lemma}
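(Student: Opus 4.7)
The statement is the standard 1-Lipschitz property of the point-to-set distance function, so my plan is to prove it directly from the definition $\text{dist}(y, \A) = \min_{z \in \A} \|y - z\|$ combined with the triangle inequality in $\R^n$.

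First, I would pick a point $z^\star \in \A$ that achieves (or approximately achieves, if $\A$ is not closed) the minimum in $\text{dist}(y_2, \A)$, so that $\text{dist}(y_2, \A) = \|y_2 - z^\star\|$. Since $z^\star$ is a specific element of $\A$, it serves as a feasible candidate in the minimization defining $\text{dist}(y_1, \A)$, which yields the upper bound $\text{dist}(y_1, \A) \leq \|y_1 - z^\star\|$. This is the key step: transferring the minimizer for $y_2$ into the minimization problem for $y_1$.

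Next, I would apply the triangle inequality $\|y_1 - z^\star\| \leq \|y_1 - y_2\| + \|y_2 - z^\star\|$ and substitute $\|y_2 - z^\star\| = \text{dist}(y_2, \A)$, giving $\text{dist}(y_1, \A) \leq \|y_1 - y_2\| + \text{dist}(y_2, \A)$. Rearranging yields the claimed inequality.

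There is essentially no difficulty here; the only subtlety is whether the minimum in the definition of $\text{dist}$ is attained. If $\A$ is assumed closed (as is standard, and as is the case for the obstacle sets $\U(t)$ in this paper), the minimum is attained and the argument above is complete. Otherwise, I would replace the minimizer $z^\star$ by an $\varepsilon$-near minimizer and let $\varepsilon \downarrow 0$ at the end to recover the same bound.
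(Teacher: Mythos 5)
Your proof is correct and follows essentially the same route as the paper's: take the minimizer $z^\star \in \A$ for $\text{dist}(y_2,\A)$, use it as a feasible candidate to bound $\text{dist}(y_1,\A) \leq \|y_1 - z^\star\|$, and apply the triangle inequality. If anything, yours is slightly tighter---the paper introduces a minimizer $a_1$ for $y_1$ that it never uses, and your remark on attainment of the minimum (handled via $\varepsilon$-near minimizers when $\A$ is not closed) addresses a subtlety the paper's definition of $\text{dist}$ as a $\min$ silently assumes away.
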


\begin{proof}
Let $a_1, a_2 \in \A, a_1 \neq a_2$ be the points corresponding to the optimal distances of the set from points $y_1, y_2$ respectively, \textit{i.e.,} $\text{dist}(y_1,\A) = \| y_1 - a_1 \|$ and $\text{dist}(y_2,\A) = \| y_2 - a_2 \|$. 
Then, $\text{dist}(y_1,\A) \leq \| y_1 - a_2 \| \leq \| y_1 - y_2 \| + \| y_2 - a_2 \| \leq \| y_1 - y_2 \| + \text{dist}(y_2,\A)$ (using triangle inequality). Hence, $\text{dist}(y_1,\A) - \text{dist}(y_2,\A) \leq \| y_1 - y_2 \|$.
\end{proof}

The following theorem proves that the Lipschitz continuity of the center and radius ensures that the solution obtained by solving the SOP generalizes to the entire continuous domain.

\begin{theorem} \label{th:constr}
     Let $[q_\cen^*, q_\rad^*, \eta^*]$ be the optimal solution of the SOP in \eqref{eq:SOP}. If the condition 
    \begin{equation} \label{eq:satisfy}
        \eta^* + \mathcal{L}\epsilon \leq 0,
    \end{equation}
    holds, where $\mathcal{L} = \mathcal{L}_{\cen} + \mathcal{L}_U + \mathcal{L}_{\rad}$, then the resulting STT
    $$\Gamma(q_\cen^*, q_\rad^*, t) = \B(\cen(q_\cen^*, t), \rad(q_\rad^*, t)),$$
    satisfies all the conditions of Definition \ref{def:stt}. 
\end{theorem}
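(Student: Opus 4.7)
The plan is to leverage Assumption~\ref{assum:funlip} to propagate the sampled constraint satisfaction of the SOP in \eqref{eq:SOP} onto every $t \in [0,t_c]$, and hence recover the ROP constraints in \eqref{eq:ROP}. The key structural fact is \eqref{eq:ball}: for every $t \in [0,t_c]$ there is some sample index $s \in [1;N]$ with $|t - t_s| \leq \epsilon$, so Lipschitz continuity in time of $\cen(q_\cen^*, \cdot)$ and $\rad(q_\rad^*,\cdot)$ immediately yields $\|\cen(q_\cen^*,t) - \cen(q_\cen^*,t_s)\| \leq \mathcal{L}_\cen \epsilon$ and $|\rad(q_\rad^*,t) - \rad(q_\rad^*,t_s)| \leq \mathcal{L}_\rad \epsilon$.

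I would then sweep through the constraints of the ROP one by one. The boundary equalities at $t=0$ and $t=t_c$ appear identically in \eqref{eq:ROP} and \eqref{eq:SOP}, so they are inherited for free from the SOP solution. For the state-space containment constraint, I would apply the triangle inequality to get $\|\cen(q_\cen^*,t) - \cen_\X\| \leq \|\cen(q_\cen^*,t_s) - \cen_\X\| + \mathcal{L}_\cen\epsilon$, combine with $\rad(q_\rad^*,t) \leq \rad(q_\rad^*,t_s) + \mathcal{L}_\rad\epsilon$, and conclude that the LHS at $t$ exceeds the LHS at $t_s$ by at most $\mathcal{L}\epsilon$. The radius positivity constraint is handled analogously with only the $\mathcal{L}_\rad\epsilon$ contribution, which is dominated by $\mathcal{L}\epsilon$. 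For the obstacle-avoidance constraint, I would invoke Lemma~\ref{lem:point-set-dist} with $y_1 = \cen(q_\cen^*,t)$, $y_2 = \cen(q_\cen^*,t_s)$, and $\A = \U(\cdot)$, giving $-\text{dist}(\cen(q_\cen^*,t),\U) \leq -\text{dist}(\cen(q_\cen^*,t_s),\U) + \mathcal{L}_\cen\epsilon$; adding the Lipschitz bound on $\rad$ again produces a total slack of at most $\mathcal{L}\epsilon$.

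In each of the three inequality constraints, the continuous-time LHS is therefore bounded above by the sampled LHS plus $\mathcal{L}\epsilon$. Since the SOP optimum satisfies every sampled constraint with value at most $\eta^*$, and since \eqref{eq:satisfy} gives $\eta^* + \mathcal{L}\epsilon \leq 0$, all three ROP inequalities hold with non-positive slack for every $t \in [0,t_c]$. This is exactly the system of conditions required by Definition~\ref{def:stt}, so $\Gamma(q_\cen^*,q_\rad^*,t)$ is a valid STT.

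The main obstacle I expect is a small notational one: the obstacle-avoidance constraint in \eqref{eq:SOP} formally evaluates the unsafe set at the continuous time argument, so Lemma~\ref{lem:point-set-dist} needs to be applied with matched time arguments inside $\text{dist}(\cdot,\U(\cdot))$. The cleanest route is to evaluate $\U$ at the sample time $t_s$ within the SOP and then separate the Lipschitz contribution coming from the center alone; no additional temporal regularity of $\U$ is then required for the argument. Everything else reduces to triangle-inequality bookkeeping controlled by the single quantity $\mathcal{L}\epsilon$.
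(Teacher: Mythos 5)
Your proposal is correct and takes essentially the same route as the paper's own proof: it propagates the sampled SOP constraints to all $t\in[0,t_c]$ via the covering condition \eqref{eq:ball}, uses the Lipschitz bounds of Assumption~\ref{assum:funlip} and the triangle inequality to absorb a total slack of $\mathcal{L}\epsilon$ into each inequality constraint, invokes Lemma~\ref{lem:point-set-dist} for the obstacle-distance term, and inherits the boundary equalities at $t=0$ and $t=t_c$ directly. Your closing remark about matching the time arguments inside $\text{dist}(\cdot,\U(\cdot))$ correctly flags a sloppiness that the paper itself commits (its SOP constraint and proof step (iii) silently interchange $\U(t)$ and $\U(t_s)$), though be aware that your claim that ``no additional temporal regularity of $\U$ is required'' does not fully close this for a genuinely time-varying unsafe set --- passing from $\text{dist}(\cen(q_\cen^*,t),\U(t_s))$ to $\text{dist}(\cen(q_\cen^*,t),\U(t))$ needs, e.g., Lipschitz variation of $\U$ in the Hausdorff distance, a caveat your argument shares with, rather than inherits beyond, the paper's proof.
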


\begin{proof}
This proof shows that if condition \eqref{eq:satisfy} is met, the STT, $\Gamma(q_\cen^*, q_\rad^*, t)$, satisfies Definition~\ref{def:stt}.

From \eqref{eq:ball}, for every $t \in [0, t_c]$, there exists a sampled point $t_s$ such that $|t - t_s| \leq \epsilon$. Therefore, for all $t \in [0, t_c]$:
\begin{align*}
    \text{(i) } &-\rad(q_\rad^*, t) + \rad_d = -\rad(q_\rad^*, t_s) + \rad_d + \rad(q_\rad^*, t_s) - \rad(q_\rad^*, t) \\
    &\leq \ \eta^* + \mathcal{L}_\rad |t-t_s| \leq \eta^* + \mathcal{L}_\rad\varepsilon \leq \eta^* + \mathcal{L}\epsilon \leq 0.
\end{align*}
This implies that $\rad(q_\rad^*, t) \geq \rad_d > 0$ for all $t \in [0, t_c]$.
\begin{align*}
     \text{(ii) } &\| \cen(q_{\cen},t) - \cen_{\X} \| + \rad(c_\rad,t) - \rad_{\X} \\
     = &\| \cen(q_{\cen},t) - \cen_{\X} \| - \| \cen(q_{\cen},t_s) - \cen_{\X} \| + \rad(c_\rad,t) - \rad(c_\rad,t_s) 
     +\| \cen(q_{\cen},t_s) - \cen_{\X} \| + \rad(c_\rad,t_s) - \rad_{\X} \\
     \leq & \mathcal{L}_\cen |t-t_s| + \mathcal{L}_\rad |t-t_s| + \eta^* \leq \eta^* + \mathcal{L}\epsilon \leq 0.
\end{align*}
This implies that $\Gamma(q_\cen^*, q_\rad^*, t) \subseteq \X$ for all $t \in [0, t_c]$.
\begin{align*}
    \text{(iii) } &- \text{dist}(\cen(q_\cen,t),\U(t)) + \rad(q_\rad,t) \\ 
    = &- \text{dist}(\cen(q_\cen,t),\U(t)) + \text{dist}(\cen(q_\cen,t_s),\U(t))\\
    &-\text{dist}(\cen(q_\cen,t_s),\U(t)) + \text{dist}(\cen(q_\cen,t_s),\U(t_s))\\
    &+ \rad(q_\rad,t) - \rad(q_\rad,t_s) - \text{dist}(\cen(q_\cen,t_s),\U(t_s)) + \rad(q_\rad,t_s) \\
    \leq &\|\cen(q_\cen,t) - \cen(q_\cen,t_s)\| 
    + \mathcal{L}_U|t-t_s| + \|\rad(q_\rad,t) - \rad(q_\rad,t_s)\| 
    + \eta_S^* 
    \leq (\mathcal{L}_{\cen} + \mathcal{L}_U + \mathcal{L}_{\rad})\epsilon \leq \mathcal{L}\epsilon + \eta_S^* \leq 0.
\end{align*}
This implies that $\Gamma(q_\cen^*, q_\rad^*, t) \cap \U = \emptyset$ for all $t \in [0, t_c]$.

Since the starting and ending constraints are enforced directly at $t=0$ and $t=t_c$, all conditions of Definition~\ref{def:stt} are satisfied when condition \eqref{eq:satisfy} is met.
\end{proof}

\begin{remark}
The Lipschitz constants $\mathcal{L}_\rad$ and $\mathcal{L}_\cen$ are needed to verify condition~\eqref{eq:satisfy}. An estimation procedure for these constants is provided in \cite{das2025spatiotemporal}.
\end{remark}

\section{Controller Synthesis}
This section presents the proposed closed-form robust control design procedure. 

\begin{figure}
    \centering
    \includegraphics[width=0.4\linewidth]{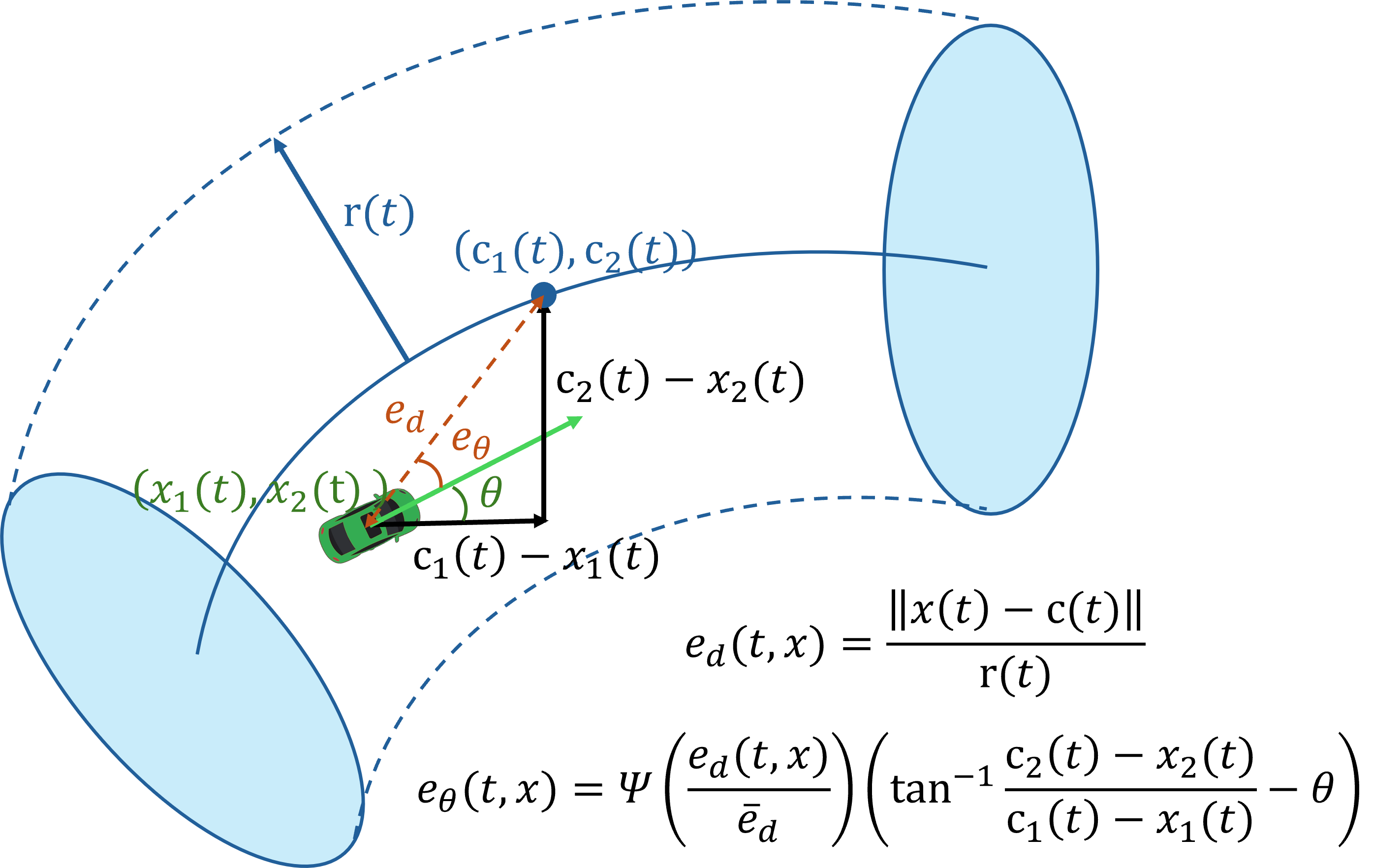}
    \caption{Robot inside circular STT.}
    \label{fig:tube}
\end{figure}

Given the circular spatiotemporal tube (STT)
$\Gamma(t)=\B(\cen(t),\rad(t))$ obtained as discussed in Section~\ref{sec:stt}, we define the distance error $e_d$ and the orientation error $e_\theta$:
\begin{align} 
e_d(t,x) &= \frac{\|x(t) - \cen(t)\|}{\rad(t)} \\ 
e_\theta(t,\xi) &= \Psi\left(\frac{e_d(t,x)}{\overline{e}_d}\right)\frac{2}{\pi} \left( \tan^{-1} \left( \frac{\cen_2(t) - x_2(t)}{\cen_1(t) - x_1(t)} \right) - \theta \right). \notag
\end{align}
Here, $\overline{e}_d \in (0,1)$ is a design threshold and $\Psi:[0,\infty)\to[0,1]$ is the smooth activation function, defined as:
\begin{equation} 
\Psi(s) = 
    \begin{cases} 
        0, &s \in [0,1-\Delta] \\
        1, &s \in [1, \infty] 
    \end{cases} 
\end{equation}
with a smooth transition for $s \in (1-\Delta,1)$, where $\Delta \in (0,1)$. Note that near the tube centre, $e_d(t,x)/\overline{e}_d$ is small and $\Psi\big(e_d(t,x)/\overline{e}_d\big)$ is 0. This factor helps remove the singularities that may arise when $e_d$ goes to 0.

Our goal is to ensure that these errors remain within $(-1, 1)$ for all $t \in \R_0^+$ and eventually converge to $0$ as $t \rightarrow \infty$. To enforce this, we introduce exponentially decaying funnel functions that bound the error dynamics over time:
\begin{align*}
    \rho_d(t) &= (\rho_{d,0}-\rho_{d,\infty})\exp (-l_dt) + \rho_{d,\infty}, \quad
    \rho_\theta(t) = (\rho_{\theta,0}-\rho_{\theta,\infty})\exp (-l_\theta t) + \rho_{\theta,\infty},    
\end{align*}
where $\rho_{d,0}, \rho_{\theta,0} \in (0,1)$ define the initial funnel widths, $\rho_{d,\infty} \in (0,\rho_{d,0}), \rho_{\theta,\infty} \in (0,\rho_{\theta,0})$ specify the final widths, and $l_d, l_\infty \in \R_0^+$ a the exponential decay rates. 

Given the funnel functions, we then define normalized error $\hat{e}=[\hat{e}_d, \hat{e}_\theta]^\top$ and transformed error $\varepsilon=[\varepsilon_d, \varepsilon_\theta]^\top$ as follows:
\begin{align*}
    \hat{e}_{d}(t,\xi) &= \frac{e_d(t,x)}{\rho_d(t)}, 
    &&\varepsilon_{d}(t,x) = \log \left( \frac{1 + \hat{e}_d(t,\xi)}{1 - \hat{e}_d(t,\xi)} \right), \\
    \hat{e}_{\theta}(t,\xi) &= \frac{e_\theta(t,\xi)}{\rho_\theta(t)},
     &&\varepsilon_{\theta}(t,\xi) = \log \left( \frac{1 + \hat{e}_\theta(t,\xi)}{1 - \hat{e}_\theta(t,\xi)} \right).
\end{align*}
For conciseness, we omit the explicit dependence on $(t,x)$ or $(t,\xi)$ in the following analysis. 

The following theorem presents the proposed control law that ensures the system remains within the STT.

\begin{figure*}[h!]
    \centering
    \includegraphics[width=0.9\textwidth]{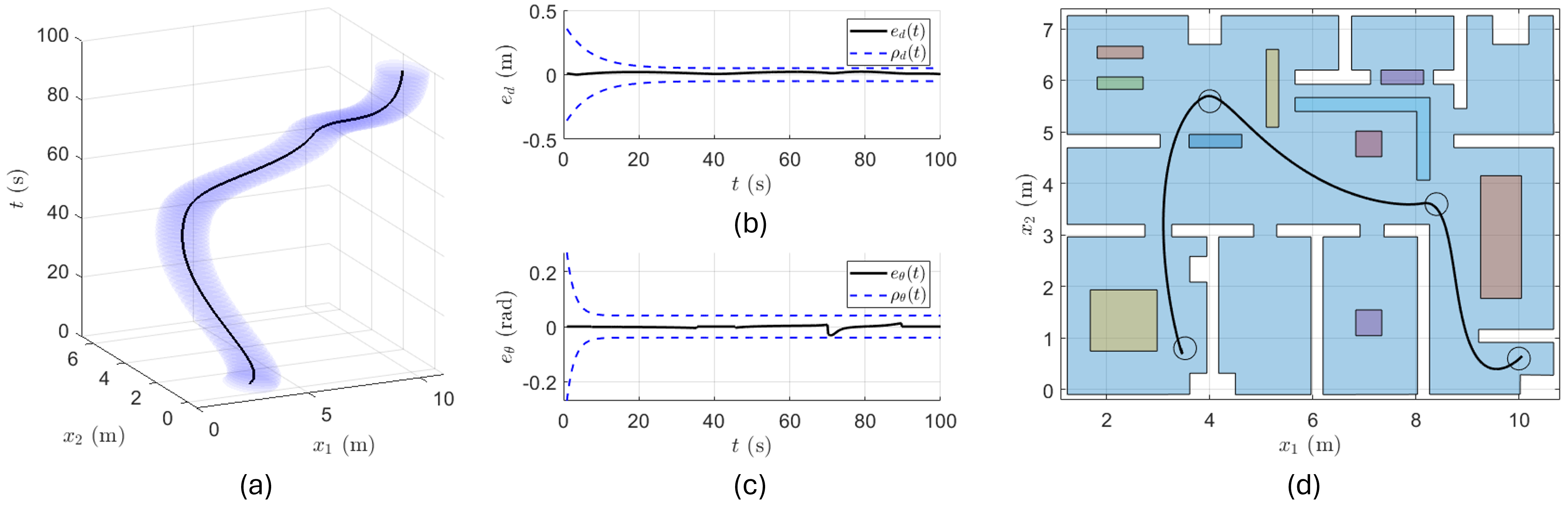}
    \caption{The constructed STT and the corresponding vehicle trajectory navigating through an office space.}
    \label{fig:sim1}
\end{figure*}

\begin{theorem}\label{thm:control}
    Consider the dynamical system in \eqref{eqn:sysdyn_unicycle}, with a T-RAS task as defined in Definition~\ref{def:ptras}. Given the STT, derived in Section~\ref{sec:stt}, if the initial state lies within the STT, then the closed-form control law:
    \begin{align}\label{eqn:Control}
        v(t,\xi) &= k_d \left( \varepsilon_d \alpha_d \cos ( \psi-\theta ) - \varepsilon_\theta \alpha_\theta \sin( \psi-\theta ) \right), \notag \\
        \omega(t,\xi) &= k_\theta \varepsilon_\theta \alpha_\theta,
    \end{align}
    with $ k_d, k_\theta>0$, guarantee that the system state remains inside the STT for all time, thus satisfying the T-RAS task.
    Here, $\alpha_d = \frac{2}{(1-\hat{e}_d^2)} \frac{1}{\rho_d  \rad}$, $\alpha_\theta = \frac{2}{(1-\hat{e}_\theta^2)} \frac{2}{\pi} \frac{1}{\rho_\theta} \frac{1}{e_d \rad}$ and $\psi = \tan^{-1}\left( \frac{\cen_2 - x_2}{\cen_1 - x_1} \right)$.
\end{theorem}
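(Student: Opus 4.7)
The approach is the classical funnel / prescribed--performance template. I will (i) establish that under the hypothesis the closed--loop system admits a unique maximal solution on some interval $[0,\tau_{\max})$ on which the normalised errors $\hat{e}_d,\hat{e}_\theta$ remain in the open set $\Omega:=(-1,1)^2$; (ii) differentiate the log--barrier Lyapunov function $V:=\tfrac12(\varepsilon_d^2+\varepsilon_\theta^2)$ along trajectories and exploit the particular structure of~\eqref{eqn:Control} to show that $V$ stays bounded on $[0,\tau_{\max})$; and (iii) conclude by contradiction that $\tau_{\max}=\infty$, which combined with $|\hat{e}_d|<1$ yields $x(t)\in\Gamma(t)$ for all $t\in[0,t_c]$, and hence the T--RAS property via Definition~\ref{def:stt}.

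For the first step, since $x(0)\in\Gamma(0)$ gives $e_d(0)<1$, choosing $\rho_{d,0}\in(e_d(0),1]$ and $\rho_{\theta,0}\in(|e_\theta(0)|,1]$ places $(\hat{e}_d(0),\hat{e}_\theta(0))\in\Omega$. The closed--loop vector field is continuous in $t$, uniformly bounded in $t$ (since $d$ is), and locally Lipschitz in $\xi$ on $\{\hat{e}\in\Omega\}$: $\rad(t)\ge \rad_d>0$ by Theorem~\ref{th:constr}, $\rho_d,\rho_\theta$ are strictly positive, and the log--barriers are smooth inside $\Omega$. Picard--Lindelöf then delivers a unique absolutely continuous maximal solution on $[0,\tau_{\max})$ with $\hat{e}(t)\in\Omega$ throughout.

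For the second step, setting $r:=\|x-\cen\|$ and differentiating $r^2$ gives $\dot r=-v\cos(\psi-\theta)+\Lambda_r(t,\xi)$, where $\Lambda_r$ collects the $[d_1,d_2]^\top$ and $\dot\cen$ contributions and is bounded on $[0,t_c]\times\Omega$. Propagating the chain rule through $e_d=r/\rad$, $\hat{e}_d=e_d/\rho_d$ and $\varepsilon_d=\log\!\frac{1+\hat{e}_d}{1-\hat{e}_d}$ produces $\dot\varepsilon_d=-\alpha_d\,v\cos(\psi-\theta)+R_d(t,\xi)$, where $R_d$ gathers the $\dot\rad$, $\dot\rho_d$ and disturbance terms and is bounded on $[0,t_c]\times\Omega$. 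Using $\dot\theta=\omega+d_\theta$ and observing that $\Psi$ cancels the $1/e_d$ singularity in $\alpha_\theta$ near the tube centre, one similarly obtains $\dot\varepsilon_\theta=-\alpha_\theta\,\omega+R_\theta(t,\xi)$ with $R_\theta$ bounded.

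Substituting~\eqref{eqn:Control} into $\dot V=\varepsilon_d\dot\varepsilon_d+\varepsilon_\theta\dot\varepsilon_\theta$, the crucial observation is that the $\cos(\psi-\theta),\sin(\psi-\theta)$ decomposition chosen for $v$, together with the $\omega$--driven term, arranges the dominant contribution into a negative semi--definite quadratic form---morally $-k_d(\varepsilon_d\alpha_d\cos(\psi-\theta)+\varepsilon_\theta\alpha_\theta\sin(\psi-\theta))^2-k_\theta\varepsilon_\theta^2\alpha_\theta^2$ up to the sign conventions in $\dot\varepsilon_d,\dot\varepsilon_\theta$---so that the $\cos\sin$ cross terms in $\varepsilon_d\dot\varepsilon_d$ cancel against matching cross terms and what survives is definite. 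Using that $\alpha_d,\alpha_\theta$ are bounded below by strictly positive constants on $\Omega$ (with $\Psi$ handling the neighbourhood of $e_d=0$) and completing squares against $R_d,R_\theta$ gives a bound $\dot V\le -cV+C$ on $[0,\tau_{\max})$ for some $c,C>0$. Gronwall's inequality then yields $V(t)\le V(0)e^{-ct}+C/c$ throughout $[0,\tau_{\max})$, so $\varepsilon_d,\varepsilon_\theta$ are uniformly bounded and $|\hat{e}_d|,|\hat{e}_\theta|$ stay strictly below $1$. This precludes $\tau_{\max}<\infty$ (in which case $\hat{e}$ would have to approach $\partial\Omega$), so $\tau_{\max}=\infty$. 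Finally $e_d(t)<\rho_d(t)\le 1$ translates to $\|x(t)-\cen(t)\|<\rad(t)$, so $x(t)\in\Gamma(t)$ for all $t\in[0,t_c]$, and Definition~\ref{def:stt} closes the argument.

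The main obstacle is the bookkeeping in the third step: the $\cos(\psi-\theta)\sin(\psi-\theta)$ cross terms produced by $\varepsilon_d\dot\varepsilon_d$ must be shown to align exactly with those coming from $\varepsilon_\theta\dot\varepsilon_\theta$ so that the dominant quadratic form is genuinely sign--definite. In parallel, one has to handle the $1/e_d$ singularity inside $\alpha_\theta$, which relies crucially on $\Psi$ switching off the orientation term when $e_d$ is small, and on the bound $\rad(t)\ge \rad_d>0$ inherited from Theorem~\ref{th:constr}.
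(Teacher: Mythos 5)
Your overall architecture coincides with the paper's proof: the same case handling of the $1/e_d$ singularity via $\Psi$ (the paper makes $e_d<\overline{e}_d$ an explicit Case 1 in which the state is trivially inside the tube), the same maximal-solution argument on $\mathbb{D}=(-1,1)^2$ (the paper cites \cite[Theorem 54]{sontag} where you invoke Picard--Lindel\"of), the same Lyapunov function $V=\tfrac12(\varepsilon_d^2+\varepsilon_\theta^2)$ with the same substituted dominant form $-k_d(\varepsilon_d\alpha_d\cos(\psi-\theta)+\varepsilon_\theta\alpha_\theta\sin(\psi-\theta))^2-k_\theta(\varepsilon_\theta\alpha_\theta)^2 e_d\rad$, the same uniform bounds ($\|\phi\|\le C_\phi$ from boundedness of $d$, $\cen$, $\rad$, $\rho_d$, $\rho_\theta$; $\rad\ge\rad_d>0$; $\alpha_d,\alpha_\theta$ lower-bounded on Case 2), and the same continuation-by-contradiction ending via \cite[Proposition C.3.6]{sontag}.

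The one substantive divergence is your Step~(ii) conclusion, and it is a gap as written. You claim that completing squares yields $\dot V\le -cV+C$ and then apply Gronwall; but the dominant term is the square of the scalar $\varepsilon^\top\zeta$ with $\zeta=[\alpha_d\cos(\psi-\theta),\ \alpha_\theta\sin(\psi-\theta)]^\top$, i.e.\ a rank-one form, and along directions with $\varepsilon^\top\zeta=0$ the $k_d$-term contributes no damping on $\varepsilon_d$ whatsoever. To get $-c\|\varepsilon\|^2$ you would have to prove that the residual $k_\theta$-term makes the combined quadratic form \emph{uniformly} positive definite on $\mathbb{D}$ (which requires, among other things, a positive lower bound on $\cos(\psi-\theta)$ inside the funnel and control of the eigenvalue ratio as $|\hat e_d|,|\hat e_\theta|\to 1$, where $\alpha_d,\alpha_\theta$ blow up) --- exactly the step you yourself flag as ``the main obstacle'' and leave open. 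The paper does not need, and does not claim, exponential decay of $V$: it adds and subtracts $k_d\Theta\|\varepsilon\|^2\|\zeta\|^2$ with $\Theta\in(0,1)$ and concludes only that $\dot V<0$ whenever $\|\varepsilon\|>\|\phi\|/(k_d\Theta\|\zeta\|)$, then uses $\|\zeta\|\ge\zeta_{\min}>0$ and $\|\phi\|\le C_\phi$ to obtain a time-independent ultimate bound $\varepsilon^*$, which suffices to confine $\hat e$ to a compact subset $\mathbb{D}'\subset\mathbb{D}$ for the continuation argument. That weaker ``negative outside a residual set'' statement is all Steps 2--3 require; your Gronwall bound buys nothing extra for the theorem (only an explicit convergence rate) and costs you precisely the definiteness argument you did not supply. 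If you retreat to the paper's ultimate-boundedness formulation, your sketch closes along the same lines; if you insist on $\dot V\le -cV+C$, you must prove the uniform definiteness of the full $2\times 2$ form, which neither your write-up nor, in fairness, the paper's own one-line norm bound actually carries out in detail.
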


\begin{proof}
The proof proceeds by considering two separate cases, (i) $e_d(t,x)<\overline{e}_d$ and (ii) $e_d(t,x)\geq\overline{e}_d$.

\textbf{(i) Case 1: $e_d(t,x)<\overline{e}_d$.}  \\
In this case,
$e_d(t,x)<\overline{e}_d\leq 1,$
which directly implies $\|x(t)-\cen(t)\| < \rad(t)$. Thus, the system is already within the tube.

\textbf{(ii) Case 2: $e_d(t,x)\geq\overline{e}_d$.} \\
Here, $\Psi(e_d(t,x) / \overline{e}_d) = 1$, and thus,
$e_\theta(t,\xi) = \frac{2}{\pi} \left( \tan^{-1} \left( \frac{\cen_2(t) - x_2(t)}{\cen_1(t) - x_1(t)} \right) - \theta \right).$
Differentiating the normalized error $\hat{e}(t) = [\hat{e}_d(t), \hat{e}_\theta (t)]^\top$ along the system trajectory, we get
\begin{align*}
    \dot{\hat{e}} =  
        \Big[(\dot{e}_d-\hat{e}_d \dot{\rho}_d)\frac{1}{\rho_d}, \ (\dot{e}_\theta-\hat{e}_\theta \dot{\rho}_\theta)\frac{1}{\rho_\theta} \Big]^\top
    := h(t,\hat{e}),
\end{align*}
where 
$\dot{e}_d = \frac{(x_1 - \cen_1)(v \cos \theta + d_1 - \dot{\cen}_1) + (x_2 - \cen_2)(v \sin \theta + d_2 - \dot{\cen}_2)}{e_d \rad^2} - \frac{e_d \dot{\rad}}{\rad}, \\
\dot{e}_\theta = \frac{2}{\pi}\Big( \frac{(x_1 - \cen_1)(v \sin \theta + d_2 - \dot{\cen}_2) - (x_2 - \cen_2)(v \cos \theta + d_1 - \dot{\cen}_1)}{e_d^2 \rad^2} - (\omega + d_\theta) \Big).$
We also define the constraints for $\hat{e}$ through the open and bounded set $\mathbb{D} :=(-1, 1)^2$. 

We now prove the theorem in three steps. First, we establish the existence of a maximal solution for the normalized error vector $\hat{e}$, defined on the interval $[0,\tau_{\max}]$ such that $\hat{e}(t) \in \mathbb{D} := (-1,1)^2$, for all $t \in [0, \tau_{\max})$. This ensures that the solution remains within the domain $\mathbb{D}$ throughout the maximal interval. Second, we demonstrate that the proposed control law \eqref{eqn:Control} guarantees that $\hat{e}(t)$ is confined to a compact subset of $\mathbb{D}$. Finally, we show that the maximal existence time $\tau_{\max}$ can be extended to infinity, completing the proof. 

\textbf{Step 1: Existence of a Maximal Solution.}\\ 
Since the initial position $x(0)$ satisfies $\| x(0) - \cen(0) \| \leq \rad(0)$, the initial normalized error $\hat e(0)$ lies within the constrained region $\mathbb{D}$. Further, the STT functions $\cen(t)$ and $\rad(t)$ are bounded and continuously differentiable, the trigonometric functions are locally Lipschitz, and the control law $[v, \omega]^\top$ is smooth over $\mathbb{D}$. Therefore, $h(t,\hat{e})$ is bounded and continuously differentiable in $t$ and locally Lipschitz in $\hat{e}$ over $\mathbb{D}$. 
According to \cite[Theorem 54]{sontag}, this guarantees a unique maximal solution to the initial value problem $\dot{\hat{e}} = h(t,\hat{e})$ on $[0, \tau_{\max})$, where $\hat{e}(t) \in \mathbb{D}$.

\textbf{Step 2: Confinement of the Normalized Error.}\\
Consider the positive definite and radially unbounded Lyapunov function
$$ V = \frac{1}{2}(\varepsilon_d^2 + \varepsilon_\theta^2). $$
Differentiating $V$ along the system trajectories:
\begin{align*}
    \dot{V} &= \varepsilon_d \dot{\varepsilon}_d + \varepsilon_\theta \dot{\varepsilon}_\theta \\
    &= -(\varepsilon_d \alpha_d \cos(\psi - \theta) - \varepsilon_\theta \alpha_\theta \sin(\psi - \theta)) v
    - ( \varepsilon_\theta \alpha_\theta e_d \rad) \omega 
        + \varepsilon_d \alpha_d \phi_d + \varepsilon_\theta \alpha_\theta \phi_\theta,
\end{align*}
where $\phi_d = \frac{ (x_1 - \cen_1)(d_1-\dot{\cen}_1) + (x_2 - \cen_2)(d_2-\dot{\cen}_2)}{e_d \rad} - e_d \dot{\rad} - \rad \hat{e}_d \dot{\rho}_d$ and $\phi_\theta = \frac{(x_1 - \cen_1) (d_2-\dot{\cen}_2) - (x_2 - \cen_2)(d_1-\dot{\cen}_1)}{e_d \rad} - e_d \rad d_\theta - \frac{\pi}{2} e_d \rad \hat{e}_\theta \dot{\rho}_\theta$.

Substituting the control laws:
\begin{align*}
    \dot{V} &= -k_d(\varepsilon_d \alpha_d \cos(\psi - \theta) - \varepsilon_\theta \alpha_\theta \sin(\psi - \theta))^2
     - k_\theta( \varepsilon_\theta \alpha_\theta)^2 e_d \rad
        + \varepsilon_d \alpha_d \phi_d + \varepsilon_\theta \alpha_\theta \phi_\theta.
\end{align*}

To establish a strict upper bound on $\dot{V}$, we utilize the algebraic inequality $-(A - B)^2 \leq -\frac{1}{2}A^2 + B^2$. Applying this to the linear velocity control term yields:
\begin{align*}
    \dot{V} &\leq -\frac{1}{2} k_d \varepsilon_d^2 \alpha_d^2 \cos^2(\psi - \theta) + k_d \varepsilon_\theta^2 \alpha_\theta^2 \sin^2(\psi - \theta) 
    - k_\theta \varepsilon_\theta^2 \alpha_\theta^2 e_d \rad + |\varepsilon_d| \alpha_d |\phi_d| + |\varepsilon_\theta| \alpha_\theta |\phi_\theta| \\
    &= -\frac{1}{2} k_d \cos^2(\psi - \theta) \alpha_d^2 \varepsilon_d^2
    - \big( k_\theta e_d \rad - k_d \sin^2(\psi - \theta) \big) \alpha_\theta^2 \varepsilon_\theta^2 
    + |\varepsilon_d| \alpha_d |\phi_d| + |\varepsilon_\theta| \alpha_\theta |\phi_\theta|.
\end{align*}

A crucial property of the proposed orientation funnel is that $\hat{e}_\theta \in (-1, 1)$ guarantees $e_\theta \in (-\rho_{\theta,0}, \rho_{\theta,0})$. From the definition of $e_\theta$, this implies $|\psi - \theta| \leq \frac{\pi}{2} \rho_{\theta,0}$. Since the initial funnel width $\rho_{\theta,0} < 1$, the angular error is bounded away from $\pm \pi/2$. Thus, $\cos^2(\psi - \theta) \geq \cos^2(\frac{\pi}{2} \rho_{\theta,0}) := c_{\min} > 0$. 

Furthermore, since we are analyzing Case 2 where $e_d \geq \bar{e}_d$, we have $e_d \rad \geq \bar{e}_d \rad_{\min}$. By selecting the orientation control gain such that $k_\theta > k_d / (\bar{e}_d \rad_{\min})$, we ensure the coefficient of $\varepsilon_\theta^2$ is strictly positive. Let $\lambda_d = \frac{1}{2} k_d c_{\min}$ and $\lambda_\theta = k_\theta \bar{e}_d \rad_{\min} - k_d > 0$. The derivative of the Lyapunov function is bounded by:
\begin{align*}
    \dot{V} \leq -\lambda_d \alpha_d^2 \varepsilon_d^2 - \lambda_\theta \alpha_\theta^2 \varepsilon_\theta^2 + |\varepsilon_d \alpha_d| |\phi_d| + |\varepsilon_\theta \alpha_\theta| |\phi_\theta|.
\end{align*}

Applying Young's Inequality ($|xy| \leq \frac{\Theta}{2} x^2 + \frac{1}{2\Theta} y^2$) to the remainder terms with $\Theta = \lambda_d$ and $\Theta = \lambda_\theta$ respectively:
\begin{align*}
    \dot{V} &\leq -\frac{\lambda_d}{2} \alpha_d^2 \varepsilon_d^2 - \frac{\lambda_\theta}{2} \alpha_\theta^2 \varepsilon_\theta^2 + \frac{\phi_d^2}{2\lambda_d} + \frac{\phi_\theta^2}{2\lambda_\theta}.
\end{align*}

{From Step~1, since $\hat{e}_d, \hat{e}_\theta \in (-1,1)$, the state $\xi(t)$ is constrained by bounds independent of $\tau_{\max}$. Since the tube parameters $\cen(t), \rad(t)$, the funnel signals $\rho_d(t), \rho_\theta(t)$, and the disturbances $d(\xi,t)$ are uniformly bounded, the remainder terms $\phi_d$ and $\phi_\theta$ are uniformly bounded by a finite constant $C_\phi$. Let $D_\phi = C_\phi^2 / (2\lambda_d) + C_\phi^2 / (2\lambda_\theta)$. 

Since $\alpha_d$ and $\alpha_\theta$ contain the terms $(1-\hat{e}_d^2)^{-1}$ and $(1-\hat{e}_\theta^2)^{-1}$ respectively, they are bounded from below by positive constants $\underline{\alpha}_d, \underline{\alpha}_\theta > 0$. Therefore:
\begin{align*}
    \dot{V} \leq -\frac{\lambda_d \underline{\alpha}_d^2}{2} \varepsilon_d^2 - \frac{\lambda_\theta \underline{\alpha}_\theta^2}{2} \varepsilon_\theta^2 + D_\phi.
\end{align*}
}
This implies that $\dot{V} < 0$ whenever $\|\varepsilon\|$ is sufficiently large. Specifically, there exists a time-independent upper bound $\varepsilon^* \in \mathbb{R}_0^+$ such that $\|\varepsilon(t)\| \leq \varepsilon^*, \ \forall t \in [0, \tau_{\max})$.

Taking the inverse of the transformed error bounds, we get:
\begin{align*}
    -1 < \frac{\exp(-\varepsilon^*)-1}{\exp(-\varepsilon^*)+1} =: e_{i,L} \leq \hat{e}_i \leq e_{i,U} := \frac{\exp(\varepsilon^*)-1}{\exp(\varepsilon^*)+1} < 1,
\end{align*}
$\forall t \in [0, \tau_{\max}), \ \text{for } i \in \{d, \theta\}.$ Therefore, by employing the control law (\ref{eqn:Control}), we can constrain $\hat{e}$ to a compact subset of $\mathbb{D}$ as:
    $\hat{e}(t) \in [\hat{e}_L, \hat{e}_U] =: \mathbb{D}' \subset \mathbb{D}, \forall t \in [0, \tau_{\max}) $, where $\hat{e}_L = [e_{d,L}, e_{\theta,L}]^{\top}$ and $\hat{e}_U = [e_{d,U}, e_{\theta,U}]^{\top}$.

\textbf{Step 3. Extension of $\tau_{\max}$ to $\infty$.}\\
We know that $e(t) \in \mathbb{D}', \forall t \in [0, \tau_{\max})$, where $\mathbb{D}'$ is a non-empty compact subset of $\mathbb{D}$.
However, if $\tau_{\max} < \infty$ then according to \cite[Proposition C.3.6]{sontag}, $\exists t' \in [0, \tau_{\max})$ such that $e(t) \notin \mathbb{D}$. This leads to a contradiction!
Hence, we conclude that $\tau_{\max}$ can be extended to $\infty$.

In conclusion, the control strategy \eqref{eqn:Control} guarantees that both the distance and orientation errors evolve within their respective funnels and eventually decay down to 0. Consequently, the vehicle $x(t)$ is ``pulled" toward the centre of the STT while always staying within the STT, thereby satisfying the T-RAS specification.
\end{proof}

\begin{remark}
The time-varying control law in \eqref{eqn:Control} provides a closed-form solution that guarantees fulfillment of the T-RAS task, even in the presence of unknown disturbances in control-affine systems. Moreover, the proposed algorithm can be extended to account for high-order dynamics by following the backstepping-like approach outlined in \cite{verginis2022kdf}. 
\end{remark}

\begin{figure*}[h!]
    \centering
    \includegraphics[width=0.95\textwidth]{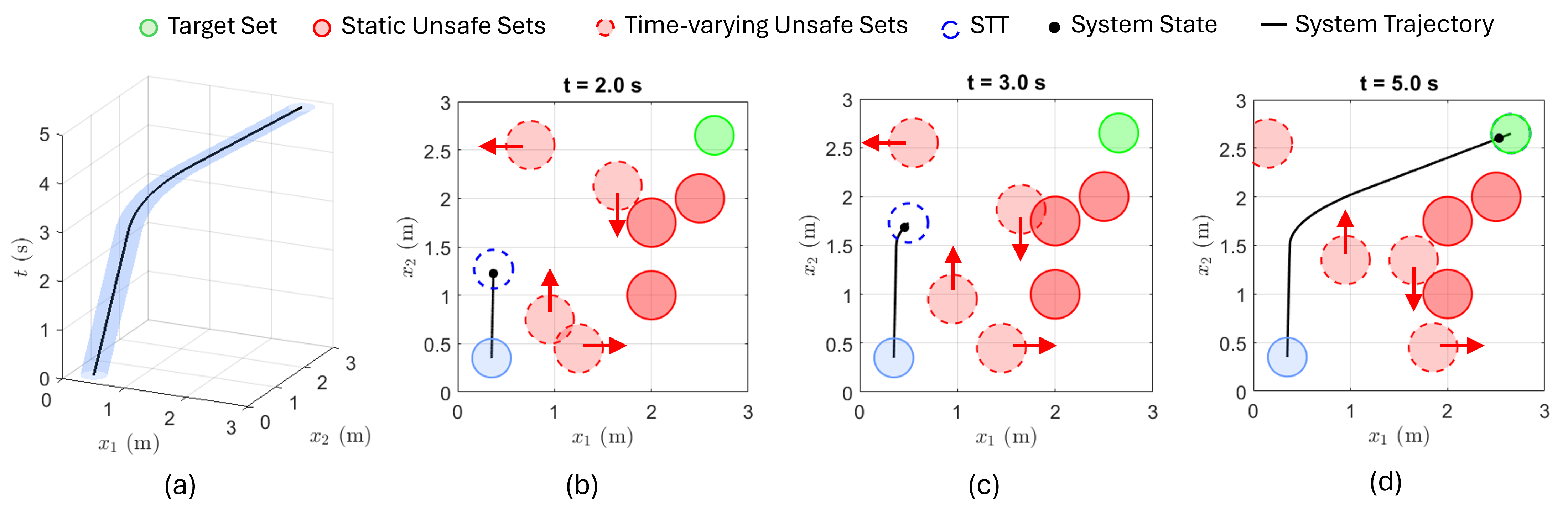}
    \caption{The constructed STT and the corresponding vehicle trajectory in a dynamic environment with time-varying obstacles.}
    \label{fig:dyn}
\end{figure*}

\section{Case Studies}

To demonstrate the effectiveness of the proposed approach, we present two case studies. All the computations were performed using Python 3.10 on a machine with a Windows operating system with an Intel Core i7-14700 CPU, 32 GB RAM. We have used the Z3 solver \cite{de2008z3} to solve the optimization programs. The implementation of the proposed method can be found at: \hyperlink{}{https://github.com/FocasLab/DifferentialDriveSTT}.

\subsection{Navigation in a Cluttered Office-Like Environment}
In the first scenario, the robot has to navigate through a cluttered office-like environment with multiple static obstacles. The task is defined as a sequential reach-avoid objectives. For a $100$ second mission, the robot has to start within the ball $\So:=\mathcal{B}([3.5,0.8], 0.2)$ and should reach its first target $\T_1:=\mathcal{B}([4, 5.6], 0.2)$ within $40$ seconds, while the next target $\T_2:=\mathcal{B}([8.4, 3.6], 0.2)$ in the next 30 seconds and finally reach its goal $\T_3:=\mathcal{B}([10, 0.6], 0.2)$ within the prescribed time while avoiding all the static obstacles in the environment. 
Given the sequential reach-avoid-stay tasks, we choose the center vector of STTs for individual tasks as two polynomials in time $t$ with the basis functions being the monomials of $t$ such as $\{1, t, t^2, \ldots\}$ for two dimensions. We consider piecewise polynomial tubes and consider the degree of polynomials to be $4,6,4$ for the first, second and third mission respectively. Considering the obstacles to be static, $\mathcal{L}_U = 0$. The algorithm converges with $\eta = -0.1$, with $\epsilon = 0.25$. The obtained Lipschitz constant of the STT is $\mathcal{L}_c = 0.2316, \mathcal{L}_r = 0$. Therefore, $\eta^* + \mathcal{L}\epsilon = -0.1 + 0.2316 \times 0.25 = -0.0421 < 0$, satisfying the condition of Theorem \ref{th:constr}, thereby implying the STT is formally verified.  

The synthesized spatiotemporal tube (STT) is shown in Fig.~\ref{fig:sim1}(a), and the corresponding robot trajectory under the proposed control law is shown in Fig.~\ref{fig:sim1}(d). The distance and orientation errors, $e_d$ and $e_\theta$, along with their funnel bounds $\rho_d(t)$ and $\rho_\theta(t)$, are plotted in Fig.~\ref{fig:sim1}(b)--(c). Both errors remain strictly within their funnels, confirming that the robot safely and successfully completes the sequence of reach-avoid-stay tasks within the prescribed time. The offline tube synthesis time is $2.081$ seconds for all the missions and per-step online controller synthesis time is $6.34$ $\mu$sec.

\subsection{
Reach-Avoid Task with Dynamic Unsafe Set}
The second scenario involves a dynamic environment, where the robot must reach the target region within a prescribed time of $5$ seconds while avoiding both static and moving obstacles.The velocity of the dynamic obstacles is known a priori which essentially is the Lipschitz constant $\mathcal{L}_U = 0.26$. Here also we consider the center of the STT to be two polynomials of $t$ corresponding to the $x$ and $y$ dimensions. We generate piecewise polynomial tubes, each of which are of degree $2$. With the choice of $\epsilon = 0.005$ and $\eta^* = -0.0075$, the algorithm converges. The Lipschitz constant corresponding to the center of the STT is obtained as $\mathcal{L}_c = 1.21, \mathcal{L}_r = 0$. Therefore the overall Lipschitz constant becomes $\mathcal{L} = 1.47$. Therefore, $\eta^* + \mathcal{L}\epsilon = -0.0075 + 1.47 \times 0.005 = -0.00015 < 0$, satisfying the condition of Theorem \ref{th:constr}, thereby implying the STT is formally verified.
Figure~\ref{fig:dyn}(a) shows the synthesized STT, and Figs.~\ref{fig:dyn}(b)--(d) illustrate the workspace at three different time instants. The results demonstrate that the proposed controller effectively adapts to a changing environment by reshaping the tube, enabling the robot to avoid time-varying obstacles and reach the target within the specified time. The offline tube synthesis time requires an overall time of $2.415$ seconds while the online per step control synthesis time is $2.18$ $\mu$sec.

\begin{figure}
    \centering
    \includegraphics[width=0.45\linewidth]{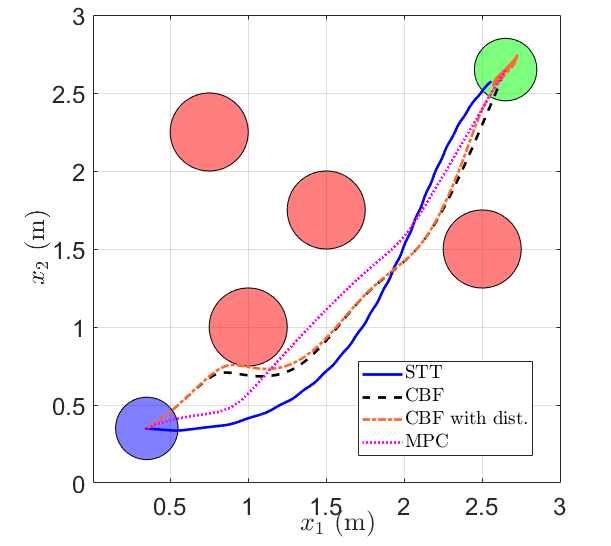}
    \caption{Comparison with existing approaches.}
    \label{fig:comparison}
\end{figure}

\textbf{Comparison:} We compare the proposed STT-based control with state-of-the-art approaches such as CBF \cite{toulkani2022reactive} and MPC \cite{sun2017disturbance}. As shown in Fig.~\ref{fig:comparison}, all methods guide the robot from the same initial region to the target while avoiding obstacles. However, the proposed STT-based approach achieves this with significantly lower per-step online control-synthesis time compared to CBF and MPC. Moreover, under external disturbances considered to be sinusoidal with the magnitude ranging between $0.0001$ to $0.0125$, the CBF and MPC based controller fails in some scenarios to prevent the robot from entering unsafe regions, whereas the proposed STT-based controller keeps the trajectory strictly within the tube, ensuring robust and safe navigation throughout the task. The average online computation time per step and success rate of the algorithms with and without disturbances are presented in the Table \ref{tab:comparison}.

\begin{table*}[]
\centering
\caption{Comparison Table of STT with classical algorithms}
\label{tab:comparison}
\resizebox{0.85\textwidth}{!}{
\begin{tabular}{|c|cc|cc|}
\hline
\multirow{2}{*}{} & \multicolumn{2}{c|}{Average Computation Time per Step ($\mu$s)} & \multicolumn{2}{c|}{Success Rate (\%)}     \\ \cline{2-5} 
& \multicolumn{1}{c|}{Without Disturbance} & With Disturbance & \multicolumn{1}{c|}{Without Disturbance} & With Disturbance \\ \hline
STT & \multicolumn{1}{c|}{$1.1 \pm 0.0$} & $1.1 \pm 0.0$ & \multicolumn{1}{c|}{$100.00$} & $100.00$ \\ \hline
CBF \cite{toulkani2022reactive} & \multicolumn{1}{c|}{$395.34 \pm 43.8$} & $399.1 \pm 19.2$ & \multicolumn{1}{c|}{$100.00$} & $90.00$ \\ \hline
MPC \cite{sun2017disturbance} & \multicolumn{1}{c|}{$63583.62 \pm 1505.51$} & $63500.64 \pm 1146.84$ & \multicolumn{1}{c|}{$100.00$} & $44.00$ \\ \hline
\end{tabular}
}
\end{table*}

\section{Conclusion and Future Work}
In this work, we address the temporal reach-avoid-stay (T-RAS) problem for differential-drive robots. A spatiotemporal tube (STT) with circular cross-sections is constructed using a sampling-based approach over a predefined time horizon, providing a safe corridor that connects the start and target regions while avoiding time-varying obstacles. A closed-form control law is then derived to keep the robot trajectory within the STT, ensuring T-RAS satisfaction. The proposed controller is computationally efficient and robust to disturbances, outperforming control barrier function and model predictive control methods in both robustness and computation time. Future work will focus on incorporating explicit input constraints in tube design.

\bibliographystyle{unsrt} 
\bibliography{sources} 

\end{document}